\documentclass[a4paper,fleqn]{cas-sc}

\usepackage[authoryear,longnamesfirst]{natbib}
\usepackage{setspace}

\def\tsc#1{\csdef{#1}{\textsc{\lowercase{#1}}\xspace}}
\tsc{WGM}
\tsc{QE}
\tsc{EP}
\tsc{PMS}
\tsc{BEC}
\tsc{DE}
\usepackage{amsmath,amssymb,amsfonts,amsthm,bm,mathtools, subcaption}
\usepackage{setspace}
\usepackage{booktabs}
\usepackage{enumitem}
\usepackage{hyperref}
\usepackage{microtype}
\usepackage{xcolor, cancel}
\usepackage[normalem]{ulem}
\newcommand{\R}{\mathbb{R}}

 \newtheorem{theorem}{Theorem}
 
 \newdefinition{rmk}{Remark}
 \newproof{pf}{Proof}
 \newproof{pot}{Proof of Theorem \ref{thm2}}
\newtheorem{ass}[subsection]{A-}

\begin{document}
\let\WriteBookmarks\relax
\def\floatpagepagefraction{1}
\def\textpagefraction{.001}
\shorttitle{Variational REML}
\shortauthors{D.Thakur}
%\begin{frontmatter}

\title [mode = title]{Variational Approximated Restricted Maximum Likelihood Estimation for Spatial Data}                     
\tnotemark[1]

% \tnotetext[1]{This document is the results of the research
%    project funded by the National Science Foundation.}

% \tnotetext[2]{The second title footnote which is a longer text matter
%    to fill through the whole text width and overflow into
%    another line in the footnotes area of the first page.}

\author[1,2]{Debjoy Thakur}[type=editor,
                        orcid=0000-0001-5514-5707]
\cormark[1]
% \fnmark[1]
\ead{debjoythakur@outlook.com, debjoy.thakur@ahduni.edu.in}
% \ead[url]{www.jkkrishnan.in}

\credit{Conceptualization of this study, Methodology, Software}

\affiliation[1]{organization={Washington University in St. Louis}, country={USA}}
\affiliation[2]{organization={Ahmedabad University}, country={India}}

% \author[2,4]{Han Thane}[style=chinese]

% \author[2,3]{William {J. Hansen}}[%
%    role=Co-ordinator,
%    suffix=Jr,
%    ]
% \fnmark[2]
% \ead{wjh@example.org}
% \ead[URL]{https://www.university.org}

% \credit{Data curation, Writing - Original draft preparation}

% \affiliation[2]{organization={World Scientific University},
%                 addressline={Street 29}, 
%                 postcode={1011 NX}, 
%                 postcodesep={}, 
%                 city={Amsterdam},
%                 country={The Netherlands}}

% \author[1,3]{T. Rafeeq}
% \cormark[2]
% \fnmark[1,3]
% \ead{t.rafeeq@example.in}
% \ead[URL]{www.campus.in}

% \affiliation[3]{organization={University of Intelligent Studies},
%                 addressline={Street 15}, 
%                 city={Jabaldesh},
%                 postcode={825001}, 
%                 state={Orissa}, 
%                 country={India}}

\cortext[cor1]{Corresponding author}
% \cortext[cor2]{Principal corresponding author}
% \fntext[fn1]{This is the first author footnote, but is common to third
%   author as well.}
% \fntext[fn2]{Another author footnote, this is a very long footnote and
%   it should be a really long footnote. But this footnote is not yet
%   sufficiently long enough to make two lines of footnote text.}

% \nonumnote{This note has no numbers. In this work we demonstrate $a_b$
%   the formation Y\_1 of a new type of polariton on the interface
%   between a cuprous oxide slab and a polystyrene micro-sphere placed
%   on the slab.
%   }

\begin{abstract}
This research considers a scalable inference for spatial data modeled through Gaussian intrinsic conditional autoregressive (ICAR) structures. The classical estimation method, restricted maximum likelihood (REML), requires repeated inversion and factorization of large, sparse precision matrices, which makes this computation costly. To sort this problem out, we propose a variational restricted maximum likelihood (VRMLE) framework that approximates the intractable marginal likelihood using a Gaussian variational distribution. By constructing an evidence lower bound (ELBO) on the restricted likelihood, we derive a computationally efficient coordinate-ascent algorithm for jointly estimating the spatial random effects and variance components. In this article, we theoretically establish the monotone convergence of ELBO and mathematically exhibit that the variational family is exact under Gaussian ICAR settings, which is an indication of nullifying approximation error at the posterior level. We empirically establish the supremacy of our VRMLE over MLE and INLA.
\end{abstract}

% \begin{graphicalabstract}
% \includegraphics{figs/cas-grabs.pdf}
% \end{graphicalabstract}

% \begin{highlights}
% \item Research highlights item 1
% \item Research highlights item 2
% \item Research highlights item 3
% \end{highlights}

\begin{keywords}
ICAR \sep REML \sep Variational REML \sep Breast Cancer
\end{keywords}

\maketitle
\section{Introduction}
In recent years, variational approximation has become a well-known method for approximating likelihoods in the statistical community and among computer scientists. For an overview of variational approximation, readers are encouraged to consult \cite{bishop2006pattern}, \cite{titterington2004bayesian}, and \cite{jordan2004graphical}; for a scalable REML model in the presence of crossed random effects, please see \cite{ghosh2022scalable}. \cite{hall2011theory} has established the theoretical justification behind likelihood-based inference using Gaussian variational approximation for the Poisson Mixed model. \cite{ormerod2012gaussian} first introduced the variational approximation for generalized linear mixed model (GLMM) involving a Gaussian approximation to the conditional distribution of random effect conditioning on the response, and they have established Gaussian variational approximation as a potential alternative to the Laplace approximation for fast non-Monte Carlo analysis of GLMM. \cite{challis2013gaussian} has proposed a novel Gaussian Kullback-Leibler (G-KL) variational approximation for Bayesian GLMM. \cite{tran2017variational} has extended the variational Bayes approximation for intractable likelihood. \cite{tan2018gaussian} has solved the problem of learning Gaussian variational approximation by incorporating sparsity in the precision matrix to approximate the posterior distribution for a high-dimensional parameter space. \cite{alquier2020concentration} has established an oracle inequality related to the quality of variational Bayesian (VB) approximation to the prior and the structure of approximation of the posterior in a tractable family. \cite{bhattacharya2025convergence} has established the convergence of the coordinate ascent variational inference (CAVI) algorithm, focusing on the two-block case by applying mean-field variational inference towards optimizing the KL divergence, and they have also established global or local exponential convergence for CAVI. \cite{goplerud2025partially} has theoretically justified that naive usage of variational inference for the case of mixed models, the traditional mean-field variational inference underestimates the posterior uncertainty in high-dimensional scenarios, and they have theoretically established that a partially factorized family maintains a trade-off between the computational complexity and approximation quality.

For the spatial framework, there are very few research articles who are dealing with variational inference. Among them, the first \cite{ren2011variational} has established that VB is a potential alternative to MCMC for approximating the posterior distribution for spatial models. \cite{wu2018fast} has proposed a hybrid mean-field variational Bayes (MFVB), which provides accurate results assuming posterior independence, although it underestimates the posterior variances. Under unobserved spatial dependence \cite{bansal2021fast} proposed a VB method for posterior inference for overdispersed spatial data, and in this scenario, Po\'lya-Gamma augmentation has a significant role in capturing posterior dependency. Similarly, \cite{parker2022computationally} uses Po\'lya-Gamma mixtures to bypass computationally expensive expectations for modeling spatial binary and count data. Recently \cite{lee2025scalable} has proposed a computationally scalable VB approach for a spatial generalized linear model, assuming the random effect as Gaussian under the assumption of a parametric covariance function and low-rank approximations.

To our knowledge, we are the first to work on a variational analogue of the REML method for the Spatial ICAR model. This approach is motivated by \cite{hall2011theory} and \cite{ormerod2012gaussian}'s Gaussian variational approximation. This work provides some novel insights into the modeling of spatial data as follows:
\begin{enumerate}
    \item We introduce a novel variational analog of REML by bridging between frequentist estimation and modern variational inference. In this approach, we optimize the tractable ELBO instead of computationally costly matrix factorization like INLA or a full likelihood-based approach.
    \item We derive closed-form updates for the variational parameters and variance components using a scalable coordinate-ascent algorithm. 
    \item We theoretically establish the monotone convergence of ELBO in this scenario and justify the exactness of the Gaussian variational family.
\end{enumerate}
Beyond these advantages, we empirically explain the utility of variational REML (VRMLE) in terms of predictive performance compared to the MLE and INLA. In this paper, in Section~\ref{s:methods}, we explain our novel VRMLE in the ICAR setting for spatial data, and in Section~\ref{s:theory}, we provide theoretical results for the ELBO and the variational family. In Section~\ref{s:experiment}, we summarize our empirical results to compare the performance of our proposed VRMLE via a simulation study and breast cancer data.

\section{Methods}\label{s:methods}
Suppose we have the spatial observations for $n$ areal units in the domain of interest, $\mathcal S_n$. The spatially correlated response is defined as $\bm Y = (Y_1,\dots, Y_n)^\top \in \mathbb R^n$ and the design matrix $\bm X \in \mathbb R^{n\times p}$, where we assume that $p \ll n$. Let's consider, under the Gaussianity assumption, that the spatial response is generated as follows:
\begin{equation}\label{eq:data-gen}
\bm Y \mid \bm \beta,\bm u,\tau_y \sim \mathcal N\bigl(\bm X\bm \beta + \bm u,\; \tau_y^{-1}\bm I_n\bigr),
\end{equation}
where $\bm \beta \in \mathbb R^p$ is the regression coefficient vector, $\bm u\in\mathbb R^n$ is the latent spatial random effect, and $\tau_y>0$ is the precision parameter. For the spatial random effect, we consider \cite{besag1995conditional}'s intrinsic conditional autoregressive (ICAR) model before the random effect, $\bm u$ in \eqref{eq:data-gen}. Let $\bm W=(w_{ij})$ be the adjacency matrix such that $w_{ij}=1$ if areas $i$ and $j$ are neighbors and $w_{ij}=0$ otherwise and consider $d_i = \sum_{j=1}^n w_{ij}, \bm D = \mathrm{diag}(d_1,\dots,d_n), \bm R = \bm D - \bm W.$ Then the ICAR prior density can be formally written as:
\begin{equation}\label{eq:icar_prior}
g(\bm u \mid \tau_u)
\propto
\tau_u^{(n-r)/2}\exp\left(-\frac{\tau_u}{2}\bm u^\top \bm R \bm u\right),
\end{equation}
where $\tau_u>0$ is the spatial precision parameter and $r$ is the rank deficiency of matrix $\bm R$ mentioned in \eqref{eq:icar_prior}. For a connected graph, $r=1$, so that $\mathrm{rank}(\bm R)=n-1$. Since the ICAR prior is improper, an identifiability constraint, $\bm 1^\top \bm u = 0$, is typically imposed. The joint density of $(\bm Y,\bm u)$ conditional on the parameters, $(\bm \beta,\tau_y,\tau_u)$ is defined as:
\begin{equation}\label{eq:joint-density}
g(\bm Y,\bm u\mid \bm \beta,\tau_y,\tau_u)
=
g(\bm Y\mid \bm \beta,\bm u,\tau_y)\times g(\bm u\mid\tau_u).
\end{equation}
The joint estimates of $\bm \Theta = \left[\bm \beta, \tau_y, \tau_u\right]^\top$ by maximizing the joint likelihood from \eqref{eq:joint-density}, the estimate of the variance parameter can be biased; therefore, \cite{jiang1996reml} has proposed restricted maximum likelihood (REML) for the linear mixed effect model. Under the assumption of a flat prior for the regression coefficient, namely $g(\bm \beta)\propto 1$, the restricted log-likelihood is defined as
\begin{equation}\label{eq:restricted-likelihood}
\begin{split}
    L(\tau_y,\tau_u;\bm Y)
\propto
\log \int_{\mathbb R^p}
\int
g(\bm Y,\bm u\mid \bm \beta,\tau_y,\tau_u)\,
d\bm u\,d\bm \beta=\log 
\int
g(\bm Y,\bm u\mid\tau_y,\tau_u)\,
d\bm u.
\end{split}
\end{equation}
Although the restricted likelihood in \eqref{eq:restricted-likelihood} admits a closed-form representation on the constrained subspace, its evaluation still requires factorization or inversion of large, sparse precision matrices. For large \(n\), this becomes computationally expensive. This formulation is equivalent to the classical REML representation based on error contrasts, obtained by integrating out the fixed effects under a flat prior. By integrating out \(\bm\beta\) under a flat prior using standard Gaussian integration, the marginal density of \(\bm Y\) given \(\bm u\) and \(\tau_y\) becomes:
\begin{equation}\label{eq:marginal-y-given-u}
g(\bm Y\mid \bm u,\tau_y)
\propto
\tau_y^{(n-p)/2}
|\bm X^\top \bm X|^{-1/2}
\exp\left\{
-\frac{\tau_y}{2}
(\bm Y-\bm u)^\top \bm P_X^\perp (\bm Y-\bm u)
\right\},
\end{equation}
where $\bm P_X^\perp = \bm I_n - \bm X(\bm X^\top \bm X)^{-1}\bm X^\top$ is the orthogonal projection matrix onto the complement of the column space of the design matrix, $\bm X$, assuming the design matrix, $\bm X$, has full column rank. Replacing \eqref{eq:marginal-y-given-u} and \eqref{eq:icar_prior} in \eqref{eq:joint-density} the joint log-density after integrating out the regression coefficient, $\bm \beta$ becomes 
\begin{equation}\label{eq:restricted-joint-log}
    \begin{split}
        \widetilde{\mathcal L}(\bm u, \tau_y, \tau_u) = \log g(\bm Y,\bm u\mid \tau_y,\tau_u)
&=
\frac{n-p}{2}\log \tau_y
-\frac{1}{2}\log |\bm X^\top \bm X|
-\frac{\tau_y}{2}(\bm Y-\bm u)^\top \bm P_X^\perp (\bm Y-\bm u)\\
&\quad
+\frac{n-r}{2}\log \tau_u
-\frac{\tau_u}{2}\bm u^\top \bm R \bm u
+ \mathrm{const.}
    \end{split}
\end{equation}
Although the posterior density, $g(\bm u | \bm Y, \tau_y, \tau_u)$, is Gaussian in the constrained subspace, the computation of the mean and variance-covariance matrix is computationally extensive for large sample sizes; we introduce the variational density function $q(\bm u)$. A natural computationally scalable choice for the variational density function is $\mathcal{N}(\bm \mu, \bm \Sigma)$ where $\bm \mu \in \R^n$ and the variance-covariance matrix $\bm \Sigma \in \R^{n \times n}$ is a positive definite matrix on the constrained subspace. According to \cite{bishop2006pattern}, from Jensen's inequality, we have 
\begin{align}
L(\tau_y,\tau_u)
&=
\log \int q(\bm u)\,
\frac{g(\bm Y,\bm u\mid \tau_y,\tau_u)}{q(\bm u)}
\,d\bm u
\nonumber\\
&\ge
\int q(\bm u)\log \frac{g(\bm Y,\bm u\mid \tau_y,\tau_u)}{q(\bm u)}\,d\bm u = \mathcal L_V(q,\tau_y,\tau_u).
\label{eq:jensen-reml}
\end{align}
In this formulation, our main concern is to develop a variational approximation of the restricted maximum likelihood estimator instead of detailed variational Bayesian inference. Here, the latent spatial random effect, $\bm u$, is modeled via a Gaussian variational distribution; however, the variance components, $(\tau_y, \tau_u)$, are treated as deterministic and are estimated through the direct optimization of the evidence lower bound (ELBO). Therefore, we do not consider any prior distribution for $\tau_y$ and $\tau_u$, unlike the Bayesian hierarchical modeling of \cite{kang2011bayesian}. The lower-bound in \eqref{eq:jensen-reml} is achieved when the variational density function $q(\bm u)$ is equivalent to the posterior density. The variational log-likelihood (ELBO), derived from the log-likelihood from \eqref{eq:restricted-joint-log}, is defined as 
\begin{equation}\label{eq:elbo-reml}
\begin{split}
    \mathcal L_V(q,\tau_y,\tau_u)
&=
\mathbb E_q\!\left[\log g(\bm Y,\bm u\mid \tau_y,\tau_u)\right]
-
\mathbb E_q\!\left[\log q(\bm u)\right]\\
&= \frac{n-p}{2}\log \tau_y
-\frac{1}{2}\log |\bm X^\top \bm X|
-\frac{\tau_y}{2}\,
\mathbb E_q\!\left[
(\bm Y-\bm u)^\top \bm P_X^\perp (\bm Y-\bm u)
\right]\\
&\quad
+\frac{n-r}{2}\log \tau_u
-\frac{\tau_u}{2}\,
\mathbb E_q\!\left[\bm u^\top \bm R \bm u\right]
-
\mathbb E_q[\log q(\bm u)]
+ \mathrm{const.}
\end{split}
\end{equation}
Since $q(\bm u)=\mathcal N(\bm \mu,\bm \Sigma)$, where $\bm 1^\top \bm \mu = 0, \bm \Sigma \bm 1 = 0$. Then, using the standard identities, we finally have
\begin{equation}\label{eq:quad}
\begin{split}
    \mathbb E_q\!\left[
(\bm Y-\bm u)^\top \bm P_X^\perp (\bm Y-\bm u)
\right]
&=
(\bm Y-\bm \mu)^\top \bm P_X^\perp (\bm Y-\bm \mu)
+
\mathrm{tr}(\bm P_X^\perp \bm \Sigma),\\
\mathbb E_q\!\left[\bm u^\top \bm R \bm u\right]
&=
\bm \mu^\top \bm R\bm \mu
+
\mathrm{tr}(\bm R\bm \Sigma),\\
-\mathbb E_q[\log q(\bm u)]
&=
\frac{1}{2}\log |\bm \Sigma|_+
+\frac{n-r}{2}(1+\log 2\pi).
\end{split}
\end{equation}
Now, substituting \eqref{eq:quad} into \eqref{eq:elbo-reml}, and absorbing all terms constant in $(\bm\mu,\bm\Sigma,\tau_y,\tau_u)$ into the generic constant, the simplified ELBO objective function becomes
\begin{equation}\label{eq:final-elbo-reml}
    \begin{split}
      \mathcal L_V(\bm \mu,\bm \Sigma,\tau_y,\tau_u)
&=
\frac{n-p}{2}\log \tau_y
-\frac{\tau_y}{2}
\left[
(\bm Y-\bm \mu)^\top \bm P_X^\perp (\bm Y-\bm \mu)
+
\mathrm{tr}(\bm P_X^\perp \bm \Sigma)
\right]\\
&\quad
+\frac{n-r}{2}\log \tau_u
-\frac{\tau_u}{2}
\left[
\bm \mu^\top \bm R\bm \mu
+
\mathrm{tr}(\bm R\bm \Sigma)
\right]
+\frac{1}{2}\log |\bm \Sigma|_+
+ \mathrm{const.}
    \end{split}
\end{equation}
The variational REML estimators are defined by maximizing the ELBO objective function in \eqref{eq:final-elbo-reml} as follows:
\begin{equation}\label{eq:VRMLE-est}
(\widehat{\bm \mu},\widehat{\bm \Sigma},\widehat{\tau}_y,\widehat{\tau}_u)
=
\arg\max_{\bm \mu,\bm \Sigma,\tau_y,\tau_u}
\mathcal L_V(\bm \mu,\bm \Sigma,\tau_y,\tau_u).
\end{equation}
This optimization may be performed by coordinate ascent, alternating between updates for the variational parameters $(\bm \mu,\bm \Sigma)$ and the precision parameters $(\tau_y,\tau_u)$. To get optimal $(\widehat{\bm \mu}, \widehat{\bm \Sigma}, \widehat{\tau}_y, \widehat{\tau}_u)$ in \eqref{eq:VRMLE-est} we require to differentiate the variational likelihood in \eqref{eq:final-elbo-reml} w.r.t $(\bm \mu, \bm \Sigma, \tau_y, \tau_u)$ and as a result the set of equations are
\begin{equation}\label{eq:derivative}
\begin{split}
\frac{\partial \mathcal L_V}{\partial \bm \mu}
&=
\tau_y \bm P_X^\perp (\bm Y-\bm \mu)-\tau_u \bm R\bm \mu
=
\bm 0,\quad
\frac{\partial \mathcal L_V}{\partial \bm \Sigma}
=
-\frac{\tau_y}{2}\bm P_X^\perp
-\frac{\tau_u}{2}\bm R
+\frac{1}{2}\bm \Sigma_*^{-1}
=
\bm 0,\\
\frac{\partial \mathcal L_V}{\partial \tau_y}
&=
\frac{n-p}{2\tau_y}
-\frac{1}{2}
\left[
(\bm Y-\bm \mu)^\top \bm P_X^\perp (\bm Y-\bm \mu)
+
\mathrm{tr}(\bm P_X^\perp \bm \Sigma)
\right]
=
0,\,
\frac{\partial \mathcal L_V}{\partial \tau_u}
=
\frac{n-r}{2\tau_u}
-\frac{1}{2}
\left[
\bm \mu^\top \bm R\bm \mu
+
\mathrm{tr}(\bm R\bm \Sigma)
\right]
=
0.
\end{split}
\end{equation}
Here \(\bm\Sigma_*^{-1}\) denotes the inverse of \(\bm\Sigma\) on the constrained subspace
\(\mathcal E=\{\bm v\in\mathbb R^n:\bm 1^\top \bm v=0\}\), where $E_\ast^{-1}$ denotes the inverse of the matrix $E$ over the constrained subspace $\mathcal E = \left\{\bm v\in \R^n: \bm 1^\top \bm v = \bm 0\right\}$. Now, by setting the first-order stationarity conditions of \eqref{eq:derivative} to zero, we obtain the following fixed-point equations:
\begin{equation}\label{eq:optimal-solution}
    \begin{split}
    \widehat{\bm \mu} &= \bigl(\tau_y \bm P_X^\perp + \tau_u \bm R\bigr)_\ast^{-1} \tau_y \bm P_X^\perp \bm Y,\quad
    \widehat{\bm \Sigma} = \bigl(\tau_y \bm P_X^\perp + \tau_u \bm R\bigr)_\ast^{-1},\\
    \widehat{\tau}_y &= \frac{n-p}{
(\bm Y-\bm \mu)^\top \bm P_X^\perp (\bm Y-\bm \mu) + \mathrm{tr}(\bm P_X^\perp \bm \Sigma)},\quad
\widehat{\tau}_u
= \frac{n-r}{ \bm \mu^\top \bm R\bm \mu + \mathrm{tr}(\bm R\bm \Sigma)}.
    \end{split}
\end{equation}
Since \(\bm P_X^\perp\) and \(\bm R\) may both be singular on \(\mathbb R^n\), the inverse of \(\tau_y \bm P_X^\perp+\tau_u \bm R\) in \eqref{eq:optimal-solution} is understood as the inverse restricted to the constrained subspace \(\mathcal E\). Using the optimal solutions from \eqref{eq:optimal-solution}, the practical algorithm for maximizing the restricted variational lower bound proceeds as follows.

\medskip

\noindent
\textbf{Algorithm: Variational REML for the ICAR model}

\begin{enumerate}
    \item Initialize $\tau_y^{(0)}>0$ and $\tau_u^{(0)}>0$.
    \item For iteration $t=0,1,2,\dots$:
    \begin{enumerate}
        \item Update the variational covariance
        \[
        \bm \Sigma^{(t+1)}
        =
        \bigl(\tau_y^{(t)}\bm P_X^\perp + \tau_u^{(t)}\bm R\bigr)_\ast^{-1}.
        \]
        \item Update the variational mean
        \[
        \bm \mu^{(t+1)}
        =
        \bigl(\tau_y^{(t)}\bm P_X^\perp + \tau_u^{(t)}\bm R\bigr)_\ast^{-1}
        \tau_y^{(t)}\bm P_X^\perp \bm Y.
        \]
        \item Update the observation precision
        \[
        \tau_y^{(t+1)}
        =
        \frac{n-p}{
        (\bm Y-\bm \mu^{(t+1)})^\top \bm P_X^\perp (\bm Y-\bm \mu^{(t+1)})
        +
        \mathrm{tr}(\bm P_X^\perp \bm \Sigma^{(t+1)})
        }.
        \]
        \item Update the spatial precision
        \[
        \tau_u^{(t+1)}
        =
        \frac{n-r}{
        (\bm \mu^{(t+1)})^\top \bm R\bm \mu^{(t+1)}
        +
        \mathrm{tr}(\bm R\bm \Sigma^{(t+1)})
        }.
        \]
    \end{enumerate}
    \item Repeat until
    \[
    \left|
    \mathcal L_V^{(t+1)}-\mathcal L_V^{(t)}
    \right|
    < \epsilon
    \]
    for some small tolerance $\epsilon>0$.
\end{enumerate}
In the previous literature, the existing hybrid variational methods, for example \cite{loaiza2022fast}, approximate Bayesian posteriors for the latent variable models by integrating a variational approximation for the global parameters via simulating from the conditional posterior distribution of latent variables. But the main limitation in this approach is that it requires repeated Monte Carlo sampling during its optimization; however, \cite{loaiza2022fast} do not exploit the sparse graphical feature of ICAR models. In contrast, our proposed VREML framework performs deterministic variational optimization directly on the REML objective, which leverages the sparse precision structure of the ICAR model, and accommodates singular precision matrices, avoiding repeated simulation from the latent field.

But one disadvantage in our proposed VRMLE is that the repeated computation of the $n \times n$ variance-covariance matrix, $\bm \Sigma$, causes large computational complexity for a large number of spatial units, although the original ICAR precision matrix is sparse; after inversion, the variational covariance matrix becomes denser. Therefore, to mitigate computational burden, we follow \cite{ong2018gaussian}'s suggested low-rank variance-covariance matrix approximation, and the variance-covariance matrix can be decomposed as $\bm\Sigma = \bm B\bm B^\top+\bm D^2$, where $\bm B\in\mathbb R^{n\times k}$ is a low-rank basis matrix with $k \ll n$, and the diagonal matrix $\bm D=\mathrm{diag}(d_1,\ldots,d_n)$. Here, the main advantage is that the computational complexity has been reduced from $O(n^2)$ to $O(n\cdot k+n)$. Using this reparametrization, substituting \eqref{eq:quad} into \eqref{eq:elbo-reml}, and absorbing all terms constant in \((\bm\mu,\bm\Sigma,\tau_y,\tau_u)\) into the generic constant, the simplified ELBO objective function becomes
\begin{equation}\label{eq:reparm-elbo-reml}
    \begin{split}
        \mathcal L_V(\bm \mu,\bm \Sigma,\tau_y,\tau_u)
&=
\frac{n-p}{2}\log \tau_y
-\frac{\tau_y}{2}
\left[
(\bm Y-\bm \mu)^\top \bm P_X^\perp (\bm Y-\bm \mu)
+
\mathrm{tr}\left[\bm P_X^\perp \left(\bm B\bm B^\top+\bm D^2\right)\right]
\right]\\
&\quad
+\frac{n-r}{2}\log \tau_u
-\frac{\tau_u}{2}
\left[
\bm \mu^\top \bm R\bm \mu
+
\mathrm{tr}\left[\bm R\cdot \left(\bm B\bm B^\top+\bm D^2\right)\right]
\right]
+\frac{1}{2}\log |\left(\bm B\bm B^\top+\bm D^2\right)|_+
+ \mathrm{const.}
    \end{split}
\end{equation}
The approximated VRMLE estimators can be obtained by optimizing the modified ELBO objective function from \eqref{eq:reparm-elbo-reml} with respect to $(\bm \mu, \bm B, \bm D, \tau_y, \tau_u)$. The mean update remains unchanged, but for $\tau_y$ and $\tau_u$ estimate the variance-covariance matrix $\bm \Sigma$ will be replaced by $\bm B\bm B^\top + \bm D^2$ in $(\widehat{\tau}_y, \widehat{\tau}_u)$ in \eqref{eq:optimal-solution}. Now for fixed $(\bm \mu, \tau_y, \tau_u)$ the covariance parameters, $(\bm B, \bm D)$ can be updated considering $\bm d = (d_1, \dots, d_n)$ and $\bm D = \mathrm{diag}(\bm d)$ by optimizing the following ELBO 
\begin{equation}\label{eq:approx_elbo_grad}
    \begin{split}
   \mathcal L_C(\bm B,\bm d) = -\frac12 \mathrm{tr} \left(\left[\tau_y\bm P_X^\perp+\tau_u\bm R\right]
\left[\bm B \bm B^\top + \bm D^2 \right]\right) + \frac12 \log\left|\left[\bm B \bm B^\top + \bm D^2 \right]\right|_+\\
\nabla_{\bm B}\mathcal L_C
=
\left(\left[\bm B \bm B^\top + \bm D^2 \right]^{-1}-\left[\tau_y\bm P_X^\perp+\tau_u\bm R\right]
\right)\bm B, \\
\nabla_{\bm d}\mathcal L_C = \mathrm{diag} \left(\left[\bm B \bm B^\top + \bm D^2 \right]^{-1}-\left[\tau_y\bm P_X^\perp+\tau_u\bm R\right]\right)\odot \bm d.
    \end{split}
\end{equation}
Using the matrix differential identities, we have established the gradients of the reparametrized ELBO function in \eqref{eq:approx_elbo_grad}, where $\odot$ indicates componentwise multiplication. Therefore, in the gradient-ascent step, we can update as follows
\begin{equation*}
    \begin{split}
\bm B^{(t+1)} = \bm B^{(t)} + \rho_t \left[ \left( (\bm\Sigma_k^{(t)})^{-1}-\bm C \right) \bm B^{(t)}\right] \\
\bm d^{(t+1)} = \bm d^{(t)} + \rho_t \left[\mathrm{diag}
\left((\bm\Sigma_k^{(t)})^{-1}-\bm C\right)\odot\bm d^{(t)}
\right]
    \end{split}
\end{equation*}
assuming $\bm \Sigma_k = \bm B \bm  B^\top + \bm D^2$ and $\bm C = \tau_y \bm P_X^\perp + \tau_u \bm R$ and $\rho_t$ as a step size corresponding to the $t$th iteration. Since using Woodbury's matrix inversion identity, we have the following expansion,
\[
(\bm B\bm B^\top+\bm D^2)^{-1} = \bm D^{-2} - \bm D^{-2}\bm B
\left( \bm I_k+\bm B^\top\bm D^{-2}\bm B \right)^{-1} \bm B^\top\bm D^{-2}.
\]
and $\bm D^2 = \mathrm{diag}(d_1^2, \dots, d_n^2)$ is a diagonal matrix and $\bm B \in \mathbb R^{n \times k}$ for $k \ll n$ we only require the inversion of $k \times k$ dimensional matrix in place of the inversion of $n \times n$ matrix, which reduces the computational cost extensively. This approach can be considered as a low-rank covariance approximated VRMLE (LRCVRMLE). However, under the Gaussian ICAR model assumption, the unrestricted Gaussian variational family already yields the exact posterior distribution; we primarily focus on the unrestricted covariance formulation throughout the remainder of the article.

\section{Theoretical Results}\label{s:theory}
In this section, we discuss different theoretical properties of our proposed variational restricted maximum likelihood (VRMLE) procedure for the Gaussian ICAR model. Let's consider the constrained subspace $\mathcal E=\left\{ \bm v\in\mathbb R^n:\bm 1_n^\top \bm v=0 \right\}$. Likewise, in Section~\ref{s:methods}, consider the ELBO loss function from \eqref{eq:elbo-reml}. Here \(\bm \mu\in \mathcal E\), \(\bm \Sigma\succ 0\) on \(\mathcal E\), \(\tau_y>0,\tau_u>0\), and \(|\bm\Sigma|_+\) denotes the pseudo-determinant of \(\bm\Sigma\) on the constrained subspace $\mathcal E$. Let's assume the ELBO at iteration \(t\), during the coordinate-ascent algorithm update is, $\mathcal L_V^{(t)} = \mathcal L_V\bigl(\bm\mu^{(t)},\bm\Sigma^{(t)},\tau_y^{(t)},\tau_u^{(t)}\bigr)$ and consider $\bigl(\bm\mu^{(t)},\bm\Sigma^{(t)},\tau_y^{(t)},\tau_u^{(t)}\bigr)$ are the variational updates corresponding to the $t$th iteration where $t=0,1,2,\dots$. We consider regularity conditions in the following:
\begin{ass}\label{ass:design}
    The design matrix \(\bm X\) has full column rank and $p \ll n$.
\end{ass}
\begin{ass}\label{ass:adjacency}
    The adjacency graph is connected, so that \(\mathrm{rank}(\bm R)=n-r\) with \(r=1\) and $r < n$.
\end{ass}
\begin{ass}\label{ass:const-pd}
    The matrix \(\tau_y \bm P_X^\perp + \tau_u \bm R\) is positive definite on \(\mathcal E\).
\end{ass}
Assumption~\textbf{A}-\ref{ass:design} is a standard assumption about the design matrix which ensures that the columns of that design matrix are linearly independent, such that $\bm X^\top \bm X$ is invertible. The condition $p \ll n$ ensures that the model is defined in a low-dimensional regime, which makes the estimation stable. Assumption~\textbf{A}-\ref{ass:adjacency} guarantees that the underlying adjacency graph is connected. Assumption~\textbf{A}-\ref{ass:const-pd} eliminates degeneracy in the covariance structure, and this guarantees identifiability and plays a crucial role in establishing the strict concavity of the ELBO with respect to $\bm \Sigma$. Under these assumptions, the following theorem holds.
\begin{theorem}\label{thm:blockwise-concavity-monotone}
Under the regularity conditions from (\textbf{A}-\ref{ass:design})--(\textbf{A}-\ref{ass:const-pd}), each coordinate update in the variational REML algorithm is uniquely defined. Moreover, the sequence \(\{\mathcal L_V^{(t)}\}\) generated by the coordinate-ascent algorithm is monotone nondecreasing and converges to a finite limit. Any accumulation point of the sequence \(\{(\bm\mu^{(t)},\bm\Sigma^{(t)},\tau_y^{(t)},\tau_u^{(t)})\}\) is a stationary point of the ELBO in \eqref{eq:final-elbo-reml}.
\end{theorem}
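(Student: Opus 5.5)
The plan is to treat the algorithm as block coordinate ascent with two blocks, $(\bm\mu,\bm\Sigma)$ and $(\tau_y,\tau_u)$, and to show three things: each block subproblem of \eqref{eq:final-elbo-reml} is strictly concave with a unique maximizer given by \eqref{eq:optimal-solution}; the resulting ELBO sequence is bounded above; and a standard limit argument gives stationarity.

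\emph{Step 1: uniqueness of the $(\bm\mu,\bm\Sigma)$ update.} Fix $\tau_y,\tau_u>0$. Write $\bm C=\tau_y\bm P_X^\perp+\tau_u\bm R$, restricted to $\mathcal E$.
\begin{itemize}
\item In $\bm\mu$, the objective is $-\tfrac12\bm\mu^\top\bm C\bm\mu+\tau_y\bm\mu^\top\bm P_X^\perp\bm Y$ plus a constant. This is strictly concave on $\mathcal E$ by (\textbf{A}-\ref{ass:const-pd}). The first-order condition in \eqref{eq:derivative}, projected onto $\mathcal E$, has the unique solution $\widehat{\bm\mu}$.
\item In $\bm\Sigma$, the objective is $-\tfrac12\mathrm{tr}(\bm C\bm\Sigma)+\tfrac12\log|\bm\Sigma|_+$ over matrices positive definite on $\mathcal E$. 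Using an orthonormal basis $\bm Q\in\R^{n\times(n-1)}$ of $\mathcal E$, this becomes a function of $\bm S=\bm Q^\top\bm\Sigma\bm Q\succ0$. The log-determinant term is strictly concave and the trace term is linear, so the function is strictly concave.
\item It is also coercive: the objective tends to $-\infty$ as $\bm S\to\partial$ or $\|\bm S\|\to\infty$, because $\bm Q^\top\bm C\bm Q\succ0$. Hence the unique maximizer is $\bm S=(\bm Q^\top\bm C\bm Q)^{-1}$, which is $\widehat{\bm\Sigma}=\bm C_*^{-1}$.
\item The $\bm\mu$ and $\bm\Sigma$ parts decouple, so the joint block update is unique.
\end{itemize}

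\emph{Step 2: uniqueness of the $(\tau_y,\tau_u)$ update.} For fixed $(\bm\mu,\bm\Sigma)$ the objective separates into $a\log\tau-b\tau/2$, with $a=(n-p)/2$ or $(n-r)/2$. This is strictly concave on $(0,\infty)$, and its maximizer $2a/b$ is unique provided $b>0$. For $\tau_u$, positivity of $b$ holds because $\mathrm{tr}(\bm R\bm\Sigma)>0$ when $\bm\Sigma\succ0$ on $\mathcal E$, by (\textbf{A}-\ref{ass:adjacency}). For $\tau_y$ it holds because $\mathrm{tr}(\bm P_X^\perp\bm\Sigma)>0$. This last claim needs $\bm P_X^\perp$ to be nonzero on $\mathcal E$, which follows from $p<n-1$ under (\textbf{A}-\ref{ass:design}). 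The updates therefore stay in the open domain, so the iteration is well defined. Monotonicity is then immediate: each step maximizes $\mathcal L_V$ exactly over one block, giving $\mathcal L_V^{(t)}\le\mathcal L_V(\bm\mu^{(t+1)},\bm\Sigma^{(t+1)},\tau^{(t)})\le\mathcal L_V^{(t+1)}$.

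\emph{Step 3: upper bound, which I expect to be the main obstacle.} The plan is to use the Jensen bound \eqref{eq:jensen-reml}, $\mathcal L_V\le L(\tau_y,\tau_u)$, and then show the restricted log-likelihood is bounded above. Evaluating $L$ in closed form on $\mathcal E$ gives $\tfrac{n-p}{2}\log\tau_y+\tfrac{n-1}{2}\log\tau_u-\tfrac12\log|\bm C|_+-\tfrac12\bm Y^\top(\cdot)\bm Y$. The difficulty is the behavior as $\tau_y\to\infty$ or $\tau_u\to\infty$, where the log-determinant terms can nearly cancel. If data degeneracy permits unboundedness, the quadratic term $\bm Y^\top(\cdot)\bm Y$ must supply the decay, and that requires $\bm P_X^\perp\bm Y\ne0$ and $\bm Y$ not lying in the appropriate null spaces. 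I would therefore try to bound $L$ by an eigen-decomposition in a common basis. Failing that, I would use a simpler direct argument: for fixed $(\bm\mu,\bm\Sigma)$, maximizing in $\tau$ yields $\mathcal L_V\le$ a constant $-\tfrac{n-p}{2}\log(\cdot)-\tfrac{n-1}{2}\log(\cdot)+\tfrac12\log|\bm\Sigma|_+$. I would then bound $|\bm\Sigma|_+$ via AM--GM against $\mathrm{tr}(\bm R\bm\Sigma)$ and $\mathrm{tr}(\bm P_X^\perp\bm\Sigma)$, adding a mild nondegeneracy assumption on $\bm Y$ if needed. A bounded monotone sequence converges to a finite limit.

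\emph{Step 4: stationarity.} Suppose a subsequence converges to a point $\bar\theta$ in the interior, where the $\tau$'s are positive and $\bm\Sigma\succ0$ on $\mathcal E$. The update maps in \eqref{eq:optimal-solution} are continuous there. Hence $\bar\theta$ is a fixed point of the two-block map, since $\mathcal L_V(T(\theta^{(t)}))-\mathcal L_V(\theta^{(t)})\to0$ and each block maximizer is unique. By the standard result for two-block coordinate ascent with unique block maximizers (Bertsekas/Grippo--Sciandrone), the fixed-point equations are exactly the first-order conditions \eqref{eq:derivative}. It remains to rule out accumulation on the boundary, for example $\tau\to0$. This follows from coercivity: the superlevel set $\{\mathcal L_V\ge\mathcal L_V^{(0)}\}$ stays away from the boundary, because $\mathcal L_V\to-\infty$ there.
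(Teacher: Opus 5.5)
Your architecture is the paper's: exact maximization over each block, strict concavity of each block, Jensen for the upper bound, and a standard coordinate-ascent argument for stationarity. Steps 1, 2 and 4 are more careful than the paper. The paper only notes $T_1\ge 0$ and $B\ge 0$, which does not ensure the $\tau$-maximizers exist. Your strict positivity of $\mathrm{tr}(\bm P_X^\perp\bm\Sigma)$ and $\mathrm{tr}(\bm R\bm\Sigma)$, the coercivity argument for $\bm\Sigma$, and the exclusion of boundary accumulation points fill real holes.

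\textbf{The gap is Step 3, which you leave open.} The paper does not close it either: it passes from $\mathcal L_V\le L(\tau_y,\tau_u;\bm Y)$ straight to ``bounded above on the parameter space,'' although the right side depends on $(\tau_y,\tau_u)$.

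Your hedge ``if needed'' should read ``needed.'' The log-determinant terms do not nearly cancel in the joint direction. Along $\tau_y=\tau_u=s$ you have $|\bm C|_+=s^{n-1}|\bm P_X^\perp+\bm R|_+$, so
$\frac{n-p}{2}\log\tau_y+\frac{n-1}{2}\log\tau_u-\frac12\log|\bm C|_+=\frac{n-p}{2}\log s+\mathrm{const}\to\infty$.
If $\bm P_X^\perp\bm Y=\bm 0$, the data term vanishes and $\bm\mu^{(t)}=\bm 0$. Both precision updates then strictly increase and diverge, and $\mathcal L_V^{(t+1)}\ge L(\tau_y^{(t)},\tau_u^{(t)};\bm Y)\to\infty$ by exactness. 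So the statement fails without the hypothesis $\bm P_X^\perp\bm Y\neq\bm 0$, which holds almost surely.

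Neither fallback route works as written. $\bm P_X^\perp$ and $\bm R$ generally do not commute, so there is no common eigenbasis. AM--GM against $\mathrm{tr}(\bm R\bm\Sigma)$ leaves a bound $\mathrm{const}-\frac{n-p}{2}\log T_1$, which is unbounded above. It can only be rescued through $T_1+B\ge\min_{\bm\mu\in\mathcal E}\{(\bm Y-\bm\mu)^\top\bm P_X^\perp(\bm Y-\bm\mu)+\bm\mu^\top\bm R\bm\mu\}$, and that minimum is positive exactly when $\bm P_X^\perp\bm Y\neq\bm 0$.

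\textbf{A clean way to close Step 3.} Let $\bm K\in\R^{n\times(n-p)}$ have orthonormal columns with $\bm K\bm K^\top=\bm P_X^\perp$, and set $\bm z=\bm K^\top\bm Y$. Integrating $\bm u$ over $\mathcal E$ shows that $L(\tau_y,\tau_u;\bm Y)$ is, up to an additive constant, the log-density of $\mathcal N(\bm 0,\bm V)$ at $\bm z$. Here $\bm V=\tau_y^{-1}\bm I_{n-p}+\tau_u^{-1}\bm G$ and $\bm G=\bm K^\top\bm R^{+}\bm K$, with $\bm R^{+}$ the Moore--Penrose inverse. If $\bm X^\top\bm 1\neq\bm 0$ (e.g.\ $\bm X$ has an intercept), then $\bm G\succ 0$. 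With $m=\tau_y^{-1}+\tau_u^{-1}\lambda_{\min}(\bm G)$ and $\kappa=\lambda_{\max}(\bm G)/\lambda_{\min}(\bm G)$, you get $m\bm I\preceq\bm V\preceq\kappa m\bm I$, hence
\[
\mathcal L_V\le L(\tau_y,\tau_u;\bm Y)\le\mathrm{const}-\frac{n-p}{2}\log m-\frac{\bm Y^\top\bm P_X^\perp\bm Y}{2\kappa m}.
\]
When $\bm P_X^\perp\bm Y\neq\bm 0$, the right side is bounded above over $m>0$, which gives Step 3. It also tends to $-\infty$ as $\min(\tau_y,\tau_u)\to 0$, since then $m\to\infty$. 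That is exactly the uniform coercivity you assert without proof in Step 4 to rule out boundary accumulation points. With this lemma and the added hypothesis, your argument goes through.
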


\begin{pf} At iteration $t$ for the coordinate ascent algorithm we have
\begin{equation}\label{eq:monotonicity}
    \begin{split}
 &\bm\Sigma^{(t+1)}
=
\arg\max_{\bm\Sigma\succ 0 \text{ on } \mathcal E}
\mathcal L_V\bigl(\bm\mu^{(t)},\bm\Sigma,\tau_y^{(t)},\tau_u^{(t)}\bigr) \implies \mathcal L_V\bigl(\bm\mu^{(t)},\bm\Sigma^{(t+1)},\tau_y^{(t)},\tau_u^{(t)}\bigr)
\ge
\mathcal L_V\bigl(\bm\mu^{(t)},\bm\Sigma^{(t)},\tau_y^{(t)},\tau_u^{(t)}\bigr),\\
&\bm\mu^{(t+1)}
=
\arg\max_{\bm\mu\in \mathcal E}
\mathcal L_V\bigl(\bm\mu,\bm\Sigma^{(t+1)},\tau_y^{(t)},\tau_u^{(t)}\bigr)\implies \mathcal L_V\bigl(\bm\mu^{(t+1)},\bm\Sigma^{(t+1)},\tau_y^{(t)},\tau_u^{(t)}\bigr)
\ge
\mathcal L_V\bigl(\bm\mu^{(t)},\bm\Sigma^{(t+1)},\tau_y^{(t)},\tau_u^{(t)}\bigr),\\
&\tau_y^{(t+1)}
=
\arg\max_{\tau_y>0}
\mathcal L_V\bigl(\bm\mu^{(t+1)},\bm\Sigma^{(t+1)},\tau_y,\tau_u^{(t)}\bigr) \implies \mathcal L_V\bigl(\bm\mu^{(t+1)},\bm\Sigma^{(t+1)},\tau_y^{(t+1)},\tau_u^{(t)}\bigr)
\ge
\mathcal L_V\bigl(\bm\mu^{(t+1)},\bm\Sigma^{(t+1)},\tau_y^{(t)},\tau_u^{(t)}\bigr),\\
&\tau_u^{(t+1)}
=
\arg\max_{\tau_u>0}
\mathcal L_V\bigl(\bm\mu^{(t+1)},\bm\Sigma^{(t+1)},\tau_y^{(t+1)},\tau_u\bigr) \implies \mathcal L_V\bigl(\bm\mu^{(t+1)},\bm\Sigma^{(t+1)},\tau_y^{(t+1)},\tau_u^{(t+1)}\bigr)
\ge
\mathcal L_V\bigl(\bm\mu^{(t+1)},\bm\Sigma^{(t+1)},\tau_y^{(t+1)},\tau_u^{(t)}\bigr).
    \end{split}
\end{equation}
Thus, from the above inequalities in \eqref{eq:monotonicity} we have 
\begin{equation*}
    \begin{split}
\mathcal L_V\bigl(\bm\mu^{(t+1)},\bm\Sigma^{(t+1)},\tau_y^{(t+1)},\tau_u^{(t+1)}\bigr)
\ge
\mathcal L_V\bigl(\bm\mu^{(t+1)},\bm\Sigma^{(t+1)},\tau_y^{(t+1)},\tau_u^{(t)}\bigr)
\ge\mathcal L_V\bigl(\bm\mu^{(t+1)},\bm\Sigma^{(t+1)},\tau_y^{(t)},\tau_u^{(t)}\bigr)\\
\ge
\mathcal L_V\bigl(\bm\mu^{(t)},\bm\Sigma^{(t+1)},\tau_y^{(t)},\tau_u^{(t)}\bigr)
\ge
\mathcal L_V\bigl(\bm\mu^{(t)},\bm\Sigma^{(t)},\tau_y^{(t)},\tau_u^{(t)}\bigr)
    \end{split}
\end{equation*}
which establishes the monotonicity i.e, $\mathcal L_V^{(t+1)}\ge \mathcal L_V^{(t)}$. Actually this monotonicity property of coordinate ascent algorithm is the ascent analogue of block coordinate descent algorithm of \cite{tseng2001convergence} which can be established by considering $\tilde{f}(\bm \Theta) = -\mathcal{L}_V(\bm \mu, \bm \Sigma, \tau_y, \tau_u)$ where $\bm \Theta = (\bm \mu, \bm \Sigma, \tau_y, \tau_u)$. Because with each block we perform an exact conditional minimization over the feasible parameter space and thus $\tilde{f}(\bm \Theta^{(t+1)}) \le \tilde{f}(\bm \Theta^{(t)})$ which justifies the equivalent monotonicity property of coordinate ascent, $\mathcal L_V^{(t+1)}\ge \mathcal L_V^{(t)}$. From Jensen's inequality in \eqref{eq:jensen-reml}, we know that the ELBO is a lower bound to the restricted log-likelihood, i.e., $\mathcal L_V(q,\tau_y,\tau_u)\le L(\tau_y,\tau_u;\bm Y),$ and thus \(\mathcal L_V\) is bounded above on the parameter space; the sequence \(\{\mathcal L_V^{(t)}\}\) is monotone increasing and bounded above. Hence, by the monotone convergence theorem for real sequences, there exists a finite limit \(\overline{L}^\star\) such that $\mathcal L_V^{(t)} \to \overline{L}^\star$ whenever $t\to\infty.$ To study the behavior of the coordinate updates, we examine the concavity properties of the ELBO with respect to each block of parameters. For fixed \((\bm\Sigma,\tau_y,\tau_u)\), the terms in \eqref{eq:final-elbo-reml} depending on \(\bm\mu\) are $-\frac{\tau_y}{2}(\bm Y-\bm\mu)^\top \bm P_X^\perp (\bm Y-\bm\mu)
-\frac{\tau_u}{2}\bm\mu^\top \bm R \bm\mu.$ Expanding the first quadratic form we get
% \begin{equation}\label{eq:hessian_mu}
%     \begin{split}
%       (\bm Y-\bm\mu)^\top \bm P_X^\perp (\bm Y-\bm\mu)
% &=
% \bm Y^\top \bm P_X^\perp \bm Y
% -2\bm\mu^\top \bm P_X^\perp \bm Y
% +\bm\mu^\top \bm P_X^\perp \bm\mu\\
% &=\bm\mu^\top (\tau_y \bm P_X^\perp \bm Y)
% -\frac{1}{2}\bm\mu^\top (\tau_y \bm P_X^\perp+\tau_u \bm R)\bm\mu
% +\mathrm{const.}\\
% \implies \nabla_{\bm\mu\bm\mu}^2 \mathcal L_V
% &=
% -(\tau_y \bm P_X^\perp+\tau_u \bm R).
%     \end{split}
% \end{equation}
\begin{equation}\label{eq:hessian_mu}
\begin{split}
(\bm Y-\bm\mu)^\top \bm P_X^\perp (\bm Y-\bm\mu)
&=
\bm Y^\top \bm P_X^\perp \bm Y
-2\bm\mu^\top \bm P_X^\perp \bm Y
+\bm\mu^\top \bm P_X^\perp \bm\mu,\\
\implies \nabla_{\bm\mu\bm\mu}^2 \mathcal L_V
&=
-(\tau_y \bm P_X^\perp+\tau_u \bm R).
\end{split}
\end{equation}
Since \(\bm P_X^\perp\succeq 0\) and \(\bm R\succeq 0\), from \eqref{eq:hessian_mu} it follows that $\tau_y \bm P_X^\perp+\tau_u \bm R \succeq 0,$ and therefore $\nabla_{\bm\mu\bm\mu}^2 \mathcal L_V \preceq 0.$ Thus, the ELBO is concave in \(\bm\mu\). Under the condition (\textbf{A}-\ref{ass:const-pd}), the matrix \(\tau_y \bm P_X^\perp+\tau_u \bm R\) is positive definite on \(\mathcal E\), so the Hessian is negative definite on \(\mathcal E\). Hence the ELBO is strictly concave in \(\bm\mu\) on \(\mathcal E\). For fixed \((\bm\mu,\tau_y,\tau_u)\), the \(\bm\Sigma\)-dependent part of the ELBO is $-\frac{\tau_y}{2}\,\mathrm{tr}(\bm P_X^\perp\bm\Sigma)
-\frac{\tau_u}{2}\,\mathrm{tr}(\bm R\bm\Sigma)
+\frac{1}{2}\log |\bm\Sigma|_+.$ The first two terms are linear in \(\bm\Sigma\), hence those are affine functions of $\bm \Sigma$ and do not contribute to the curvature of the objective function. The map $\bm\Sigma \mapsto \log |\bm\Sigma|_+$ is strictly concave on the cone of positive definite matrices over \(\mathcal E\). Therefore, the entire expression is strictly concave in \(\bm\Sigma\). There are no cross product of \(\bm\mu\) and \(\bm\Sigma\) terms in \eqref{eq:final-elbo-reml}. Hence, the ELBO is the sum of a concave function of \(\bm\mu\) and a concave function of \(\bm\Sigma\). Since there are no cross terms between \(\bm\mu\) and \(\bm\Sigma\), the ELBO is separable in these parameters and hence jointly concave in \((\bm\mu,\bm\Sigma)\). For fixed \((\bm\mu,\bm\Sigma,\tau_u)\), define $T_1(\bm\mu,\bm\Sigma)
=
(\bm Y-\bm\mu)^\top \bm P_X^\perp (\bm Y-\bm\mu)
+
\mathrm{tr}(\bm P_X^\perp \bm\Sigma).$ Because \(\bm P_X^\perp\succeq 0\) and \(\bm\Sigma\succeq 0\) on \(\mathcal E\), we have \(T_1(\bm\mu,\bm\Sigma)\ge 0\). The \(\tau_y\)-dependent part of the ELBO is
\begin{equation}\label{eq:tau_y}
    h(\tau_y)
=
\frac{n-p}{2}\log \tau_y
-\frac{\tau_y}{2}T_1(\bm\mu,\bm\Sigma),\; h'(\tau_y)
=
\frac{n-p}{2\tau_y}
-\frac{1}{2}T_1(\bm\mu,\bm\Sigma),
\;
h''(\tau_y)
=
-\frac{n-p}{2\tau_y^2}<0
\quad (\tau_y>0).
\end{equation}
Thus, from \eqref{eq:tau_y} the measurable function, \(h\) is strictly concave in \(\tau_y\). Similarly, define $B(\bm\mu,\bm\Sigma)
= \bm\mu^\top \bm R \bm\mu + \mathrm{tr}(\bm R \bm\Sigma).$ Since \(\bm R\succeq 0\) and \(\bm\Sigma\succeq 0\) on \(\mathcal E\), we have \(B(\bm\mu,\bm\Sigma)\ge 0\). The \(\tau_u\)-dependent part of the ELBO is
\begin{equation}\label{eq:tau_u}
    f(\tau_u)
=
\frac{n-r}{2}\log \tau_u
-\frac{\tau_u}{2}B(\bm\mu,\bm\Sigma), \; f'(\tau_u)
= \frac{n-r}{2\tau_u}
-\frac{1}{2}B(\bm\mu,\bm\Sigma),
\; f''(\tau_u) = -\frac{n-r}{2\tau_u^2}<0
\quad (\tau_u>0).
\end{equation}
Therefore, from \eqref{eq:tau_u}, the function \(f\) is strictly concave in \(\tau_u\). Hence, each block update has a unique maximizer. Combined with the monotonicity established in \eqref{eq:monotonicity} and the upper boundedness of the ELBO, this proves convergence of the ELBO sequence to a finite limit. Standard coordinate-ascent arguments then imply that any accumulation point is a stationary point of the ELBO.
\end{pf}

\begin{rmk}
Theorem~\ref{thm:blockwise-concavity-monotone} provides the main algorithmic justification for the proposed variational REML estimator. In particular, each coordinate update is well-defined and unique, the ELBO cannot decrease across iterations, and the sequence of ELBO values converges to a finite limit. The theorem guarantees convergence to a stationary point of the ELBO, rather than asserting global optimality.
\end{rmk}
Next, we show that for the Gaussian ICAR model, the Gaussian variational family on the ICAR constrained subspace is exact. Consequently, the evidence lower bound (ELBO) attains the restricted log-likelihood at the true posterior distribution. Under the spatial data generation model from \eqref{eq:data-gen} and the ICAR prior from \eqref{eq:icar_prior} let's consider the variational family as 
\begin{equation}\label{eq:variation-family}
   \mathcal Q = \left\{ \mathcal N_{\mathcal E}(\bm\mu,\bm\Sigma):
\bm\mu\in\mathcal E,\; \bm\Sigma\succ 0 \text{ on }\mathcal E
\right\}. 
\end{equation}

\begin{theorem}\label{thm:exact-gaussian-variational-family}
Under the spatial generation model from \eqref{eq:data-gen}, ICAR prior from \eqref{eq:icar_prior}, the variational family from \eqref{eq:variation-family} then under the the regularity conditions from (\textbf{A}-\ref{ass:design})-(\textbf{A}-\ref{ass:const-pd}), we have
\begin{equation}\label{eq:var-opt}
    \sup_{q\in\mathcal Q}\mathcal L_V(q,\tau_y,\tau_u)
=
L(\tau_y,\tau_u;\bm Y).
\end{equation}
\end{theorem}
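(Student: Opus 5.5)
We prove \eqref{eq:var-opt} through the standard ELBO--KL decomposition, and then show that the exact posterior of $\bm u$ lies in $\mathcal Q$. Fix $\tau_y,\tau_u>0$ and work throughout on the constrained subspace $\mathcal E$, taking Lebesgue measure on $\mathcal E\cong\mathbb R^{n-1}$ as the reference measure. This is the measure under which both the ICAR density \eqref{eq:icar_prior} and the degenerate Gaussians $\mathcal N_{\mathcal E}(\bm\mu,\bm\Sigma)$ have densities. With $g(\bm Y,\bm u\mid\tau_y,\tau_u)$ the joint density from \eqref{eq:restricted-joint-log}, the posterior $g(\bm u\mid\bm Y,\tau_y,\tau_u)=g(\bm Y,\bm u\mid\tau_y,\tau_u)/\exp\{L(\tau_y,\tau_u;\bm Y)\}$ is well defined as soon as $L$ is finite. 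We check finiteness in the next step.

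\textbf{Step 1 (posterior identification).} Collect the $\bm u$-dependent terms of \eqref{eq:restricted-joint-log}:
\[
-\tfrac12\bm u^\top(\tau_y\bm P_X^\perp+\tau_u\bm R)\bm u+\tau_y\bm u^\top\bm P_X^\perp\bm Y .
\]
By (\textbf{A}-\ref{ass:const-pd}), the operator $\bm C=\tau_y\bm P_X^\perp+\tau_u\bm R$ restricted to $\mathcal E$ is positive definite. Completing the square on $\mathcal E$ therefore gives a Gaussian kernel with covariance $\bm C_\ast^{-1}$ and mean $\bm C_\ast^{-1}\tau_y\bm\Pi\bm P_X^\perp\bm Y$, where $\bm\Pi$ is the orthogonal projection onto $\mathcal E$. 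The linear term must be projected onto $\mathcal E$ because $\bm u\in\mathcal E$, so only $\bm\Pi\bm P_X^\perp\bm Y$ enters the inner product.

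These are precisely the fixed points $\widehat{\bm\mu},\widehat{\bm\Sigma}$ in \eqref{eq:optimal-solution}. The resulting Gaussian integral over $\mathcal E$ is finite, so $L(\tau_y,\tau_u;\bm Y)<\infty$. The posterior is then $q^\star=\mathcal N_{\mathcal E}(\widehat{\bm\mu},\widehat{\bm\Sigma})$, and it belongs to $\mathcal Q$ because $\widehat{\bm\mu}\in\mathcal E$ and $\widehat{\bm\Sigma}\succ0$ on $\mathcal E$.

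\textbf{Step 2 (ELBO--KL identity).} For any $q\in\mathcal Q$, write $\log g(\bm Y,\bm u)=L+\log g(\bm u\mid\bm Y)$ inside \eqref{eq:elbo-reml}. This gives
\[
\mathcal L_V(q,\tau_y,\tau_u)=L(\tau_y,\tau_u;\bm Y)-\mathrm{KL}\bigl(q\,\|\,g(\cdot\mid\bm Y,\tau_y,\tau_u)\bigr).
\]
All expectations here are finite, because every term is a quadratic or a log-determinant under a Gaussian, as computed in \eqref{eq:quad}. Since $\mathrm{KL}\ge0$, we get $\sup_{q\in\mathcal Q}\mathcal L_V\le L$, which recovers \eqref{eq:jensen-reml}.

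Taking $q=q^\star$ from Step 1 makes the KL divergence vanish, so the supremum is attained and equals $L$. This proves \eqref{eq:var-opt}. As a cross-check, one can substitute $(\widehat{\bm\mu},\widehat{\bm\Sigma})$ into \eqref{eq:final-elbo-reml} and compare with the closed-form Gaussian integral: $\mathrm{tr}(\bm C\widehat{\bm\Sigma})=n-r$, and $\tfrac12\log|\widehat{\bm\Sigma}|_+$ matches $-\tfrac12\log|\bm C|_+$ from the normalising constant.

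\textbf{Main obstacle.} The delicate point is the bookkeeping of the singular structure, so that "density", "determinant" and "$\mathrm{KL}$" are all taken with respect to the same $(n-r)$-dimensional reference measure. Two things need care. First, the entropy in \eqref{eq:quad} uses $|\bm\Sigma|_+$ and the factor $(n-r)$, and these must be consistent with the ICAR normalisation $\tau_u^{(n-r)/2}$ and the pseudo-determinant of $\bm R$ absorbed into the constant. Second, the mean in \eqref{eq:optimal-solution} should be read with the projection $\bm\Pi$ applied to $\bm P_X^\perp\bm Y$. Assumption (\textbf{A}-\ref{ass:adjacency}) gives $r=1$, so the null space of $\bm R$ is exactly $\mathrm{span}(\bm 1)$ and $\mathcal E$ is its orthogonal complement. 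I would handle all of this at once by choosing an orthonormal basis $\bm V\in\mathbb R^{n\times(n-1)}$ of $\mathcal E$, writing $\bm u=\bm V\bm z$, and proving the whole argument for the non-degenerate variable $\bm z\in\mathbb R^{n-1}$. In these coordinates every pseudo-inverse and pseudo-determinant becomes an ordinary one, and the "const." terms agree on both sides of \eqref{eq:var-opt}.
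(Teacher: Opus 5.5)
Your proposal is correct and follows essentially the same route as the paper. It identifies the posterior of $\bm u$ as $\mathcal N_{\mathcal E}(\bm\mu^\star,\bm\Sigma^\star)$, with $\bm\Sigma^\star=(\tau_y\bm P_X^\perp+\tau_u\bm R)_\ast^{-1}$, by completing the square on $\mathcal E$, then uses $L=\mathcal L_V+\mathrm{KL}$ and takes $q$ equal to that posterior. Your extra bookkeeping (one Lebesgue reference measure on $\mathcal E$, projecting $\tau_y\bm P_X^\perp\bm Y$ onto $\mathcal E$, and checking that $L$ is finite) makes precise points the paper leaves implicit, without changing the argument.
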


\begin{pf} After integrating out \(\bm\beta\), the marginal density of \(\bm Y\) given \(\bm u\) and \(\tau_y\) from \eqref{eq:marginal-y-given-u} and combining this with the ICAR prior from \eqref{eq:icar_prior} up to a normalizing constant we have
\begin{equation}\label{eq:post_u_y}
    g(\bm u\mid \bm Y,\tau_y,\tau_u)
\propto
\exp\!\left\{
-\frac{\tau_y}{2}
(\bm Y-\bm u)^\top \bm P_X^\perp (\bm Y-\bm u)
-\frac{\tau_u}{2}\bm u^\top \bm R \bm u
\right\},
\qquad \bm u\in\mathcal E.
\end{equation}
Expand the first quadratic term:
\begin{equation}\label{eq:quad_pf}
    (\bm Y-\bm u)^\top \bm P_X^\perp (\bm Y-\bm u)
=
\bm Y^\top \bm P_X^\perp \bm Y
-2\bm u^\top \bm P_X^\perp \bm Y
+\bm u^\top \bm P_X^\perp \bm u.
\end{equation}
Substituting the expansion from \eqref{eq:quad_pf} into the exponent of \eqref{eq:post_u_y}, the first term does not depend on \(\bm u\). Therefore, the posterior kernel can be written as:
\[
\begin{aligned}
-\frac{\tau_y}{2}
(\bm Y-\bm u)^\top \bm P_X^\perp (\bm Y-\bm u)
-\frac{\tau_u}{2}\bm u^\top \bm R \bm u
&=
-\frac{\tau_y}{2}\bm Y^\top \bm P_X^\perp \bm Y
+\tau_y \bm u^\top \bm P_X^\perp \bm Y
-\frac{1}{2}\bm u^\top (\tau_y\bm P_X^\perp+\tau_u\bm R)\bm u,\\
\implies g(\bm u\mid \bm Y,\tau_y,\tau_u)
&\propto
\exp\!\left\{
-\frac{1}{2}\bm u^\top \bm A \bm u
+
\bm b^\top \bm u
\right\},
\qquad \bm u\in\mathcal E,
\end{aligned}
\]
where $\bm A=\tau_y\bm P_X^\perp+\tau_u\bm R, \bm b=\tau_y\bm P_X^\perp \bm Y.$ Since \(\bm A\) is positive definite on \(\mathcal E\), define $\bm\Sigma^\star=\bm A^{-1}_{\mathcal E},$ and $\bm\mu^\star=\bm A^{-1}_{\mathcal E}\bm b.$ Then we have
\[
\begin{aligned}
    &-\frac{1}{2}\bm u^\top \bm A \bm u+\bm b^\top \bm u
=
-\frac{1}{2}(\bm u-\bm\mu^\star)^\top \bm A(\bm u-\bm\mu^\star)
+\frac{1}{2}(\bm\mu^\star)^\top \bm A \bm\mu^\star,\\
\implies& g(\bm u\mid \bm Y,\tau_y,\tau_u)
\propto
\exp\!\left\{
-\frac{1}{2}
(\bm u-\bm\mu^\star)^\top
(\bm\Sigma^\star)^{-1}
(\bm u-\bm\mu^\star)
\right\},
\qquad \bm u\in\mathcal E,
\end{aligned}
\]
which is precisely the density of a Gaussian distribution on the constrained subspace, i.e. $g(\bm u\mid \bm Y,\tau_y,\tau_u) = \mathcal 526841N_{\mathcal E}(\bm\mu^\star,\bm\Sigma^\star).$ Thus the conditional posterior is Gaussian on \(\mathcal E\). Now let $q(\bm u)=N_{\mathcal E}(\bm\mu,\bm\Sigma)\in\mathcal Q$ and using formal variational identity from \cite{bishop2006pattern},
\[
L(\tau_y,\tau_u;\bm Y)
=
\mathcal L_V(q,\tau_y,\tau_u)
+
\mathrm{KL}\!\left(
q(\bm u)\,\|\,g(\bm u\mid \bm Y,\tau_y,\tau_u)
\right).
\]
Since the Kullback-Leibler divergence is always nonnegative so we have $\mathcal L_V(q,\tau_y,\tau_u)\le L(\tau_y,\tau_u;\bm Y).$ Thus the true posterior itself belongs to \(\mathcal Q\), namely
\[
g(\bm u\mid \bm Y,\tau_y,\tau_u)
=
\mathcal N_{\mathcal E}(\bm\mu^\star,\bm\Sigma^\star)\in\mathcal Q.
\]
Therefore, choosing $q^\star(\bm u)=g(\bm u\mid \bm Y,\tau_y,\tau_u)$ yields $\mathrm{KL}\!\left(
q^\star(\bm u)\,\|\,g(\bm u\mid \bm Y,\tau_y,\tau_u)
\right)=0,$ and hence $\mathcal L_V(q^\star,\tau_y,\tau_u)=L(\tau_y,\tau_u;\bm Y).$ Thus the Gaussian variational family is exact, and $\sup_{q\in\mathcal Q}\mathcal L_V(q,\tau_y,\tau_u) = L(\tau_y,\tau_u;\bm Y).$ This completes the proof.
\end{pf}

\begin{rmk}
Theorem~\ref{thm:exact-gaussian-variational-family} shows that for the Gaussian ICAR model, the Gaussian variational approximation does not introduce any approximation error at the posterior level. In this setting, the ELBO is tight and attains the restricted log-likelihood at the true constrained Gaussian posterior.
\end{rmk}

\section{Results and Discussions}\label{s:experiment}
In this section, we illustrate the performance of VRMLE with other competitive methods via a simulation study and a real data analysis. We conducted a detailed simulation study to compare the performance of two other well-known methods: (i) the integrated nested Laplace approximation (INLA), (ii) the traditional exact maximum likelihood estimator (MLE), with our newly proposed variational restricted maximum likelihood estimator (VRMLE) and LRCVRMLE. In this section, we thoroughly compare the performance of our VRMLE methods with other competitive approaches with the increasing spatial sample size, which is the cardinality of the spatial sampling domain, $\mathcal{S}_n$, increases. In a simulation study, we generate regularly spaced $n_0\times n_0$ areal data; as a result, the sample size $n = n_0^2$. In this setup, we have fixed $p=3$ and generated the response like \eqref{eq:data-gen} and consider the design matrix as: 
\[
\bm X = \begin{bmatrix}
1 & x_{11} & x_{21}\\
\vdots & \vdots & \vdots\\
1 & x_{1n} & x_{2n}
\end{bmatrix},
\]
where $x_1$ and $x_2$ are the standardized row and column coordinates of the lattice locations, respectively; the true regression coefficient is $\bm \beta = (1.0,\ 1.2,\ -1.0)^\top$ and the true noise, $\varepsilon \sim \mathcal N_n(0, \sigma_\varepsilon^2 I_n)$ where we set $\sigma_\varepsilon^2 = 0.7$. Let's consider $\bm H\in \R^{n \times (n-1)}$ consists of orthonormal eigenvectors corresponding to the non-zero eigenvalues of, $\bm C_n = \bm I_n - \frac{1}{n}\bm 1 \bm 1^\top$ and it satisfies $\bm H^\top \bm H = \bm I_{n-1}$ and $\bm H^\top \bm 1 = 0$. Then for the spatial random effect, we consider $\bm u = \bm H\cdot \bm \Theta$ where we assume, $\bm \Theta \sim \mathcal N_{n-1}(\bm 0, \sigma_u^2 K^{-1})$ where $\bm K = \bm H^\top \bm R \bm H$ and as a result $\bm u \sim \mathcal{N}_n (\bm 0, \sigma_u^2 \bm H\bm K^{-1} \bm H^\top )$ by fixing $\sigma_u^2 = 1.3$. We run this simulation study $N_{\text{sim}}= 1000$ and compare average mean-square prediction error (mean MSPE), average mean square error (MSE) for the posterior mean and variance of the random effect, i.e., $\hat{\sigma}_u^2$, and for the error variance, we also consider average MSE which is considered as $\hat{\sigma}_\varepsilon^2$. 
\begin{figure}
    \centering
    \includegraphics[width=0.5\linewidth]{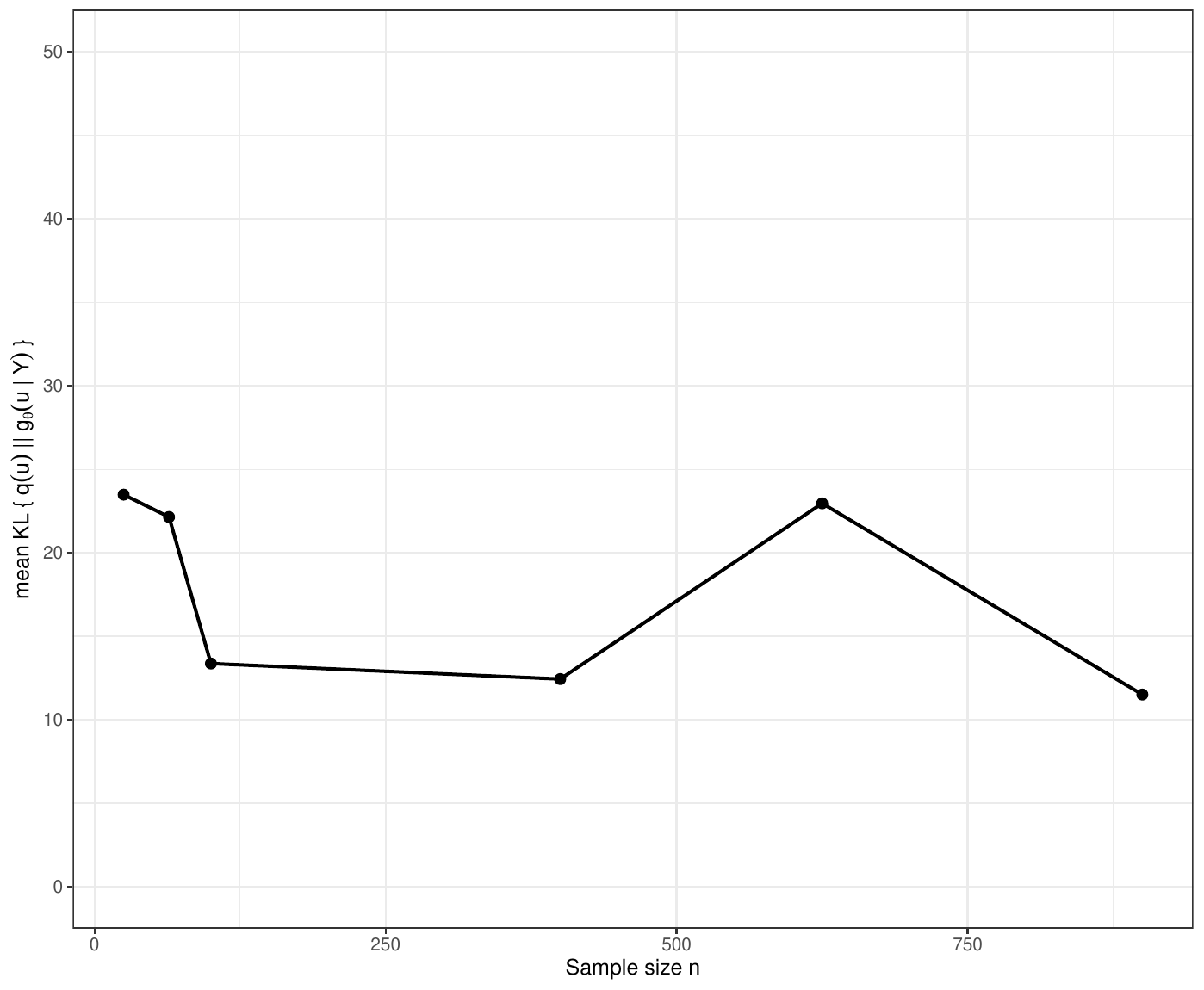}
    \caption{Mean Kullback–Leibler divergence between the variational posterior, $q(\bm u)$ and the true posterior, $g_{\bm \theta}(\bm u |\bm Y)$ across different sample sizes.}
    \label{fig:mean_kl}
\end{figure}
\begin{table}[ht]
\centering
\caption{Simulation performance comparison across methods and sample sizes.}
\label{tab:metrics_simulation}
\begin{tabular}{llcccc}
\hline
Method & $n$ & Avg. MSPE & Avg. Posterior $\bm u$ Mean MSE & Avg. Posterior $\bm u$ Variance MSE & Avg. $\bm \beta$ MSE \\
\hline
INLA & 25  & 0.989 & 0.468 & 0.447 & 0.414 \\
INLA & 64  & 1.522 & 0.708 & 0.482 & 0.183 \\
INLA & 100 & 1.283 & 0.846 & 0.498 & 0.096 \\
INLA & 400 & 1.322 & 0.437 & 0.049 & 0.029 \\
INLA & 625 & 1.139 & 0.527 & 0.001 & 0.030 \\
INLA & 900 & 1.306 & 0.587 & 0.038 & 0.051 \\
\hline
LRCVRMLE & 25  & 0.998 & 0.442 & 0.335 & 0.424 \\
LRCVRMLE & 64  & 1.624 & 0.684 & 0.242 & 0.227 \\
LRCVRMLE & 100 & 1.487 & 0.995 & 0.201 & 0.083 \\
LRCVRMLE & 400 & 1.718 & 0.922 & 0.240 & 0.060 \\
LRCVRMLE & 625 & 1.744 & 1.119 & 0.264 & 0.029 \\
LRCVRMLE & 900 & 1.587 & 0.969 & 0.205 & 0.076 \\
\hline
MLE & 25  & 0.985 & 0.473 & 0.470 & 0.416 \\
MLE & 64  & 1.526 & 0.715 & 0.505 & 0.182 \\
MLE & 100 & 1.278 & 0.843 & 0.534 & 0.094 \\
MLE & 400 & 1.318 & 0.433 & 0.035 & 0.029 \\
MLE & 625 & 1.141 & 0.529 & 0.084 & 0.030 \\
MLE & 900 & 1.306 & 0.587 & 0.142 & 0.051 \\
\hline
VRMLE & 25  & 0.987 & 0.468 & 0.445 & 0.416 \\
VRMLE & 64  & 1.523 & 0.708 & 0.440 & 0.189 \\
VRMLE & 100 & 1.343 & 0.892 & 0.176 & 0.118 \\
VRMLE & 400 & 1.322 & 0.436 & 0.040 & 0.029 \\
VRMLE & 625 & 1.139 & 0.528 & 0.001 & 0.030 \\
VRMLE & 900 & 1.305 & 0.586 & 0.036 & 0.051 \\
\hline
\end{tabular}
\end{table}

\begin{figure}
    \centering
    \includegraphics[width=0.6\linewidth]{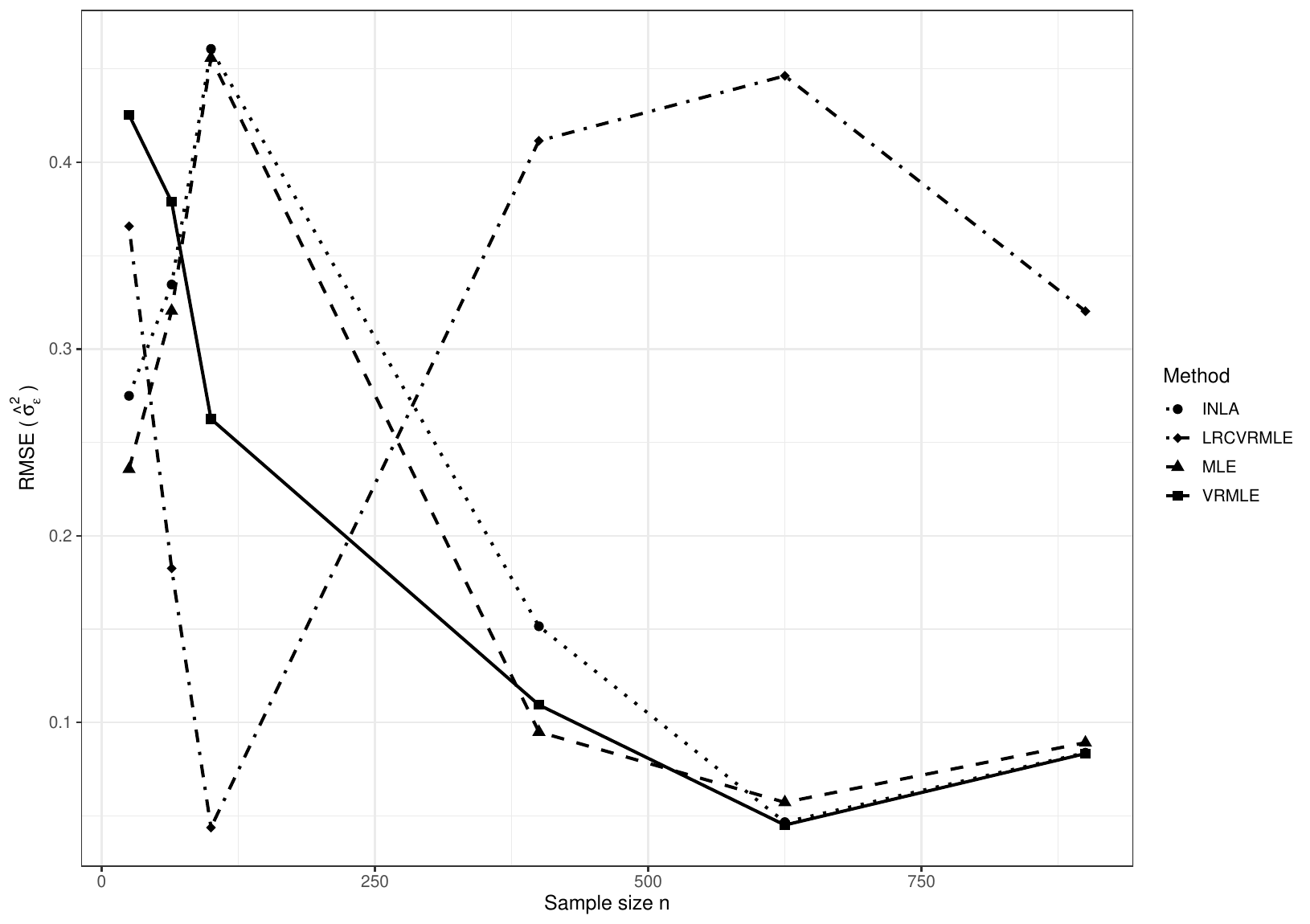}
   \caption{Average RMSE comparison for $\hat{\sigma}_\varepsilon^2$.}
    \label{fig:rmse_error}
\end{figure}

\begin{table}[ht]
\centering
\caption{Predictive performance comparison.}
  \label{tab:breast-comp}
\begin{tabular}{lcc}
\hline
  Method & RMSE    &   MAE\\
      \hline
     VRMLE & 0.5751874 & 0.4044629\\
LRCVRMLE & 0.7838234 & 0.6262440\\
 Exact REML & 0.5983507 & 0.4196303\\
 MLE & 0.5983492 & 0.4196293\\
     INLA & 0.5982228 & 0.4195322\\
\hline
\end{tabular}
\end{table}

\begin{figure}
    \centering
    \includegraphics[width=0.8\linewidth]{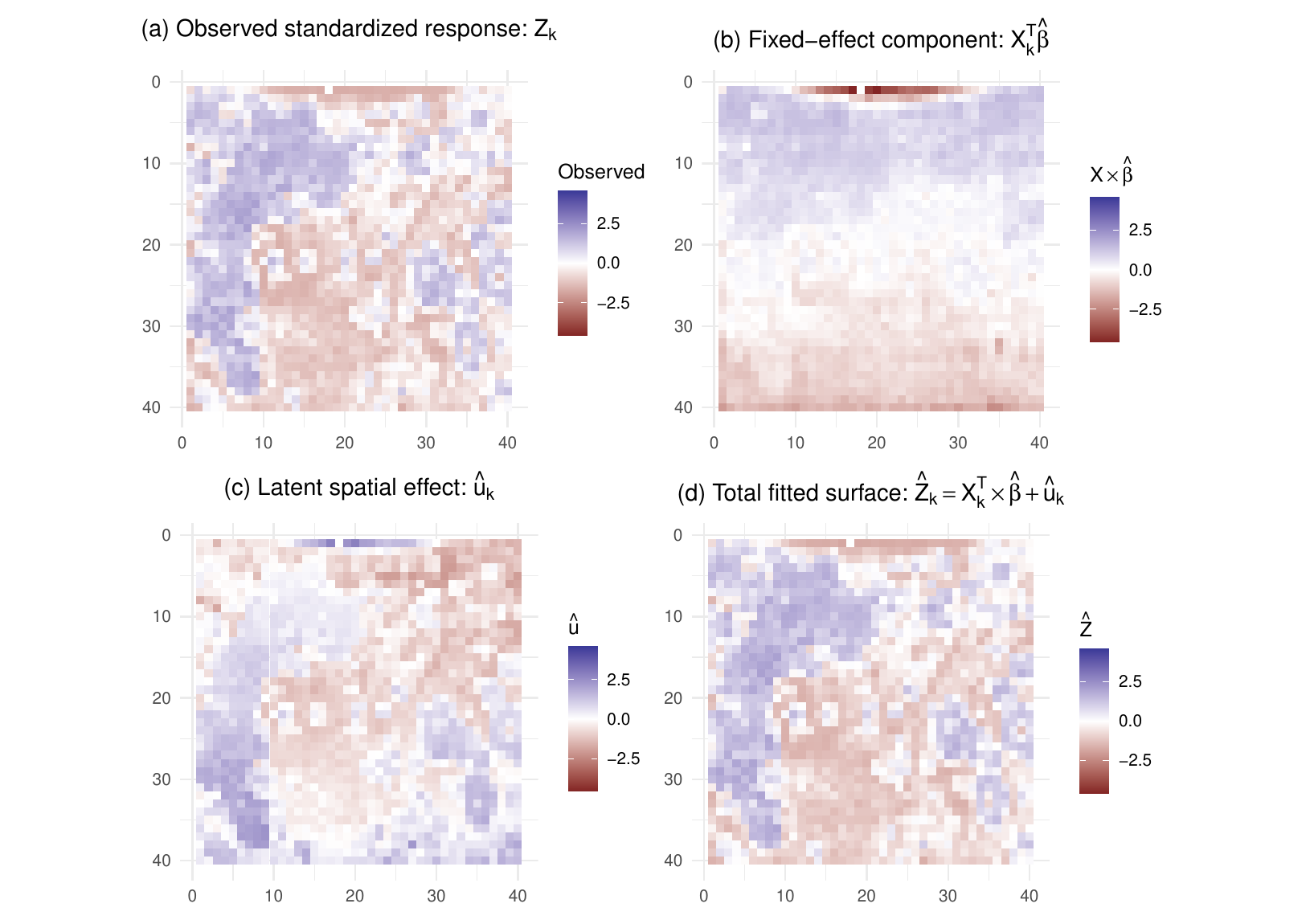}
    \caption{Spatial variation of observed and predicted EPCAM gene count.}
    \label{fig:result_epcam}
\end{figure}
In Figure~\ref{fig:mean_kl} we visualize the mean KL divergence between the variational posterior, $q(\bm u)$, and the true posterior $g_{\bm \theta}(\bm u | \bm Y)$ with increasing sample sizes. Figure~\ref{fig:mean_kl} illustrates that the variational posterior, $q(\bm u)$, becomes closer to the true posterior distribution, $g_{\bm \theta}(\bm u | \bm Y)$, with increasing sample size. Although there is a temporary increase in KL divergence that is detected at the intermediate spatial sampling units, the downward trend implies improved approximation accuracy and asymptotic empirical stability of the VRMLE estimator. In Table~\ref{tab:metrics_simulation}, we demonstrate the simulation performance. In Table~\ref{tab:metrics_simulation}, we observe that the average MSPE decreases with increasing sample size, which is competitive compared to INLA, MLE, and LRCVRMLE. A similar behavior is detectable for the MSE of the posterior mean and variance of the random effect. Table~\ref{tab:metrics_simulation} compares the $\beta-$surface accuracy, and that accuracy is quite competitive compared to the other methods. Thus, we can conclude that our VRMLE's predictive performance and other accuracy are quite competitive compared to the other traditional models. The RMSE of $\hat{\sigma}_u^2$ decreases with increasing the size of the spatial domain $\mathcal S_n$. In Figure~\ref{fig:rmse_error}, we illustrate the decreasing RMSE for error variance estimation with increasing sample size, and we observe that our VRMLE has performed effectively, dominating the other two methods.

For real data analysis, we consider the spatial transcriptomics data obtained from \textit{Xenium Human Breast Cancer}. This platform generally provides high-resolution gene expression measurements at the single-cell level along with spatial coordinates. Each observation corresponds to a cell located at spatial position $\mathbf{s}_i = (x_i, y_i) \in \mathbb{R}^2$ with associated gene expression counts. Let $\mathbf{s}_i = (x_i, y_i) \in \mathbb{R}^2$ denote the spatial location of cell $i$, for $i = 1, \dots, N$, where $N$ is the total number of observed cells. Let $\{(\mathbf{s}_i, Y_i, L_i)\}_{i=1}^{N}$ denote the observed data, where $Y_i$ represents the expression count of a selected gene, EPCAM and $L_i$ denotes the total library size for cell $i$. Due to the high density of cell-level observations, we aggregate the data onto a regular grid covering the spatial domain. Let $\{\mathcal{G}_k\}_{k=1}^{M}$ denote the collection of grid cells, where $M$ is the number of non-empty grid cells. For each grid cell $\mathcal{G}_k$, we compute:
\[
\begin{aligned}
   \bar{Y}_k = \frac{1}{m_k} \sum_{i \in \mathcal{G}_k} Y_i, \;
\bar{L}_k = \frac{1}{m_k} \sum_{i \in \mathcal{G}_k} L_i, \;
m_k = \#\{i : i \in \mathcal{G}_k\},\;
Z_k = \frac{\log(1 + \bar{Y}_k) - \overline{Y}}{\mathrm{sd}(\log(1 + \bar{Y}_k))}.
\end{aligned}
\]
and consider $Z_k$ as a response in the $k$th grid and $\bm{X}_k = \left(1, \log(1+\bar{L}_k), \log(1+n_k), s_{k1}, s_{k2}\right)$ is the design matrix corresponding to the $k$th grid with the center of location, $\bm s_k = (s_{k1}, s_{k2})$ and jointly estimate $(\tau_y, \tau_u)$ from \eqref{eq:elbo-reml}. We have considered the model for this spatial data as $\bm Z | \bm \beta, \bm u, \bm \tau_y\sim \mathcal N(\bm X^\top \bm \beta + \bm u, \tau_y^{-1} \bm I_n)$. We compare the predictive performance of our VRMLE with other methods using two metrics, RMSE and MAE, in Table~\ref{tab:breast-comp}. From Table~\ref{tab:breast-comp}, we observe that our VRMLE has the lowest RMSE and MAE among the five methods: REML, MLE, INLA, LRCVRMLE, and VRMLE. Figure~\ref{fig:result_epcam} illustrates the standardized spatial surface related to the EPCAM gene expression count, which is obtained using our proposed VRMLE model. In Figure~\ref{fig:result_epcam}, the panel (a) visualizes the standardized observed gene expression count, whereas panel (b) demonstrates the fixed effect $\bm {X} ^\top\widehat {\bm\beta}$, panel (c) demonstrates the latent spatial random effect, $\bm u$, and the last panel (d) illustrates the fitted response, $\widehat Z_k = \bm X_k^\top\widehat{\bm\beta} + \widehat u_k $. Panel (b) of Figure~\ref{fig:result_epcam} exhibits the large-scale spatial fixed effect, which is contributed by the sequencing depth, cell density, and spatial coordinates; however, panel (c) captures spatial residual dependence. In summary, panel (b) and panel (c) depict the meaningful contribution of the fixed effect and latent random effect, particularly that the fixed effect captures the spatial trend, whereas the ICAR spatial random effect accounts for residual dependence on a finer scale. These empirical results suggest that our proposed VRMLE substantially helps us in distinguishing the fixed effect and the random residual spatial effect. Thus, from Figure~\ref{fig:result_epcam}, we can observe that the observed and VRMLE-estimated spatial surfaces are very close to each other. From this empirical evidence, it is clear that for Gaussian ICAR modeling, our proposed approximated VRMLE approach provides better scalability in modeling the data. For better insight into the contribution of the feature space and spatial components, the fitted VRMLE surface can be decomposed into the fixed-effect, $\bm X_k^\top\widehat{\bm \beta}$, and a latent spatial random effect, $\widehat{\bm u}_k$. In this analysis, the variance of the fixed effect is $0.673$, and it explains approximately $68.6\%$ of the fitted variability. Similarly, the variability of the spatial random effect is $0.738$, and that explains approximately $75.2\%$ of the fitted variability. The correlation between the observed response, $Z_k$, and the fitted random effect, $\widehat{\bm u}_k$, is $0.62$, whereas that between the fixed effect and the fitted random effect is $-0.31$, which altogether suggests that the two components capture different sources of variation. Next, we assess the predictive contribution of the latent random spatial effect via conducting an ablation experiment. We have considered two models; one involves a random effect, $\bm u$, $ \text{Full Model:}\;\bm Z = \bm X^\top \bm \beta + \bm u + \bm \varepsilon$ and the other is the reduced model in the absence of a random effect, $ \text{Reduced Model:}\;\bm Z = \bm X^\top \bm \beta + \bm \varepsilon$. In each repetition, the observations are randomly split into training and test data by maintaining the split ratio of $80:20$, and this process is repeated five times. We fit the reduced model using the training data and predict for test locations as $\widehat{\bm Z}_{\mathrm{test}} = \bm X_{\mathrm{test}}^\top \widehat{\bm{\beta}}$. In the full VREML model, we estimate the latent random effect for training locations, but for held-out locations we use a conditional distribution using the fitted ICAR model. Suppose the ICAR precision matrix is partitioned as
\[
R=
\begin{pmatrix}
R_{TT} & R_{TS}\\
R_{ST} & R_{SS}
\end{pmatrix},
\]
where the subscripts $T$ and $S$ denote the training and test locations, respectively. We obtain the marginal precision matrix for the training locations through $R_{T,\mathrm{marg}} = R_{TT} - R_{TS} R_{SS}^{-1} R_{ST}$, and employ it at the time of fitting the VREML model. After obtaining the posterior estimates of the spatial random effects for training locations, we predict the latent spatial effects at the held-out locations through the conditional ICAR expectation, $\widehat{\bm u}_{\mathrm{test}} = - R_{SS}^{-1} R_{ST}\widehat{\bm u}_{\mathrm{train}}$. Then the prediction for the full model is $\widehat{\bm Z}_{\mathrm{test}} =
\bm X_{\mathrm{test}}^\top \widehat{\bm{\beta}} +\widehat{\bm  u}_{\mathrm{test}}$.
We first investigate the existence of spatial autocorrelation in the residuals for the reduced model by calculating Moran's $I$, and the value is $0.651$. This is a clear indication that the residuals of the reduced model are positively spatially correlated. This indicates that the observed covariates alone are insufficient to explain the spatial variation in the EPCAM gene expression data and therefore necessitates the inclusion of a latent spatial random effect in the proposed VREML model. Finally, we evaluate the predictive performance of the two competing models using RMSE, MAE, and MSE for the test data.
\begin{table}[h!]
    \centering
    \caption{Contribution to the predictive performance of random effects.}
    \begin{tabular}{cccc}
    \hline
    Model & Average RMSE (SD) & Average MAE (SD) & Average MSE (SD)\\
    \hline
     Full Model    &  0.454 (0.0248) & 0.342 (0.0091) & 0.207 (0.023)\\
      Reduced Model   & 0.769 (0.0369) & 0.645 (0.019) & 0.592 (0.0582)\\
      \hline
    \end{tabular}
    \label{tab:abl_sum}
\end{table}
Table~\ref{tab:abl_sum} summarizes the predictive performance of the two competing models over five-fold train-test splits. From Table~\ref{tab:abl_sum}, we observe that the reduced model has yielded higher average RMSE, MAE, and MSE, whereas the full model produces smaller values. Therefore, inclusion of a latent spatial random effect in the full VREML model improves average RMSE, MAE, and MSE by $41\%, 47\%$, and $65\%$. This finding in Table~\ref{tab:abl_sum} is a strong indication of the improvement in predictive performance by the inclusion of the latent ICAR component, which captures important residual spatial variation not explained by the observed covariates. Thus, our proposed VREML model makes a substantial contribution to out-of-sample predictive accuracy. These empirical results validate that both the fixed effect and random effect significantly explain the spatial variation of EPCAM gene expression.

Our proposed ICAR-based VRMLE approach is closely connected to \cite{kang2011bayesian}'s spatial random-effect model (SRE). Their method approximates the random effect via a multi-resolution basis function, not necessarily orthogonal, which is effectively capable of reducing the dimension. \cite{kang2011bayesian}'s low-rank approximation method replaces the computational burden of repeated inversion of the covariance matrix; however, our VRMLE makes the precision matrix sparse through encoding the neighborhood dependence structure of spatial areal units through ICAR dependence. Our LRCVRMLE makes a bridge between \cite{kang2011bayesian}'s SRE and our VRMLE via retaining the ICAR-based precision structure at the time of approximating dense posterior covariance through \eqref{eq:reparm-elbo-reml}. Thus, we can claim that our proposed framework may be viewed as complementary to the SRE model because, while SRE targets dimension-reduced spatial random effects, at the same time, our VRMLE focuses on scalable REML-type inference for ICAR random effects. Thus, we thoroughly demonstrate that our novel VRMLE provides competitive predictive performance compared to MLE and INLA. From a broader perspective, we can consider VRMLE as a potential alternative to the likelihood-based approaches for high-dimensional spatial models. The combination of the interpretability of REML along with the scalability of variational approaches motivates new directions for efficient inference for large spatial data sets in this article. In the near future, we will establish the asymptotic properties of the VRMLE estimator under different spatial asymptotics.
\section*{Declaration of competing interest} No competing interest.
\section*{Funding Availability} No funding options are available.
\section*{Data and Code Availability} Data is open source, and code is available \url{https://github.com/debjoythakur/Variational-REML-Spatial}.
\section*{Acknowledgement} The author is extremely thankful to the reviewer and associate editor for their constructive suggestion to improve the quality of the paper.
\printcredits

%% Loading bibliography style file
%\bibliographystyle{model1-num-names}
\bibliographystyle{cas-model2-names}

% Loading bibliography database
\bibliography{reference-reml_new}

%\vskip3pt

\end{document}